\newtheorem{thm}{Theorem}
\newtheorem{prop}[thm]{Proposition}
\newtheorem{defn}[thm]{Definition}
\newtheorem*{thm*}{Proposition}
\newtheorem*{prop*}{Proposition}
\newtheorem*{lem*}{Lemma}
\newcommand{\Fig}[1]{Figure~\ref{#1}}
\newcommand{\Sec}[1]{Section~\ref{#1}}
\newcommand{\Tab}[1]{Table~\ref{#1}}
\newcommand{\Eqn}[1]{Eq.~(\ref{#1})}
\newcommand{\Eqs}[2]{Eqs.~(\ref{#1}-\ref{#2})}
\newcommand{\Thm}[1]{Theorem~\ref{#1}}
\newcommand{\Prop}[1]{Proposition~\ref{#1}}
\newcommand{\Algo}[1]{Algorithm~\ref{#1}}
\definecolor{Gray}{gray}{0.8}
\newcommand{\red}[1]{\textcolor{red}{#1}}
\renewcommand{\hat}{\widehat}
\renewcommand{\tilde}{\widetilde}
\renewcommand{\>}{{\rightarrow}}
\DeclareMathOperator*{\argmin}{argmin}
\newcommand{\rank}{\operatorname{rank}}
\newcommand{\R}{{\mathbb R}}
\newcommand{\Z}{{\mathbb Z}}
\renewcommand{\P}{{\mathbf P}}
\newcommand{\E}{{\mathbf E}}
\newcommand{\1}{{\mathbf 1}}
\newcommand{\0}{{\mathbf 0}}
\newcommand{\ip}{{\xrightarrow{P}}}
\newcommand{\F}{{\mathcal F}}
\renewcommand{\L}{{\mathbf L}}
\newcommand{\cL}{{\mathcal L}}
\newcommand{\Q}{{\mathbf Q}}
\newcommand{\T}{{\mathbf T}}
\newcommand{\V}{{\mathbf V}}
\newcommand{\W}{{\mathbf W}}
\newcommand{\X}{{\mathcal X}}
\newcommand{\Y}{{\mathcal Y}}
\renewcommand{\a}{{\mathbf a}}
\renewcommand{\b}{{\mathbf b}}
\newcommand{\f}{{\mathbf f}}
\newcommand{\h}{{\mathbf h}}
\newcommand{\p}{{\mathbf p}}
\newcommand{\q}{{\mathbf q}}
\renewcommand{\u}{{\mathbf u}}
\newcommand{\x}{{\mathbf x}}
\newcommand{\y}{{\mathbf y}}
\newcommand{\zo}{\textup{\textrm{0-1}}}
\newcommand{\acc}{\textup{\textrm{acc}}}
\newcommand{\er}{\textup{\textrm{er}}}
\newcommand{\regret}{\textup{\textrm{regret}}}
\newcommand{\decode}{\textup{\textrm{decode}}}
\newcommand{\balpha}{{\boldsymbol \alpha}}
\icmltitlerunning{Convex Calibrated Surrogates for the Multi-Label F-Measure}
\begin{document}


\twocolumn[
\icmltitle{Convex Calibrated Surrogates for the Multi-Label F-Measure}



\icmlsetsymbol{equal}{*}

\begin{icmlauthorlist}
\icmlauthor{Mingyuan Zhang}{upenn}
\icmlauthor{Harish G. Ramaswamy}{iitm}
\icmlauthor{Shivani Agarwal}{upenn}
\end{icmlauthorlist}

\icmlaffiliation{upenn}{Department of Computer and Information Science, University of Pennsylvania, Philadelphia, PA, USA}
\icmlaffiliation{iitm}{Department of Computer Science and Engineering, Indian Institute of Technology Madras, Chennai, India}

\icmlcorrespondingauthor{Shivani Agarwal}{ashivani@seas.upenn.edu}

\icmlkeywords{Multi-label Learning, Multi-Label Classification, F-measure, Surrogate Losses, Output Coding}

\vskip 0.3in
]



\printAffiliationsAndNotice{}  


\begin{abstract}
The $F$-measure is a widely used performance measure for multi-label classification,
where multiple labels can be active in an instance simultaneously
(e.g.\ in image tagging, multiple tags can be active in any image). 
In particular, the $F$-measure explicitly balances recall
(fraction of active labels predicted to be active) 
and precision 
(fraction of labels predicted to be active that are actually so),
both of which are important in evaluating the overall performance of 
a multi-label 
classifier. 
As with most discrete prediction problems, however, directly optimizing the $F$-measure is computationally hard. 
In this paper, we explore the question of designing convex surrogate losses that are \emph{calibrated} for the $F$-measure -- specifically, that have the property that minimizing the surrogate loss yields (in the limit of sufficient data) a Bayes optimal multi-label classifier for the $F$-measure.
We show that the $F$-measure for an $s$-label problem, when viewed as a $2^s\times 2^s$ loss matrix, has rank at most $s^2+1$, and apply a result of \citet{Ramaswamy+14} to design a family of convex calibrated surrogates for the $F$-measure. 
The resulting surrogate risk minimization algorithms can be viewed as decomposing the multi-label $F$-measure learning problem into $s^2+1$ binary class probability estimation problems. We also provide a quantitative regret transfer bound for our surrogates, which allows any regret guarantees for the binary problems to be transferred to regret guarantees for the overall $F$-measure problem, and discuss a connection with the algorithm of \citet{Dembczynski2013}. Our experiments confirm our theoretical findings.  
\end{abstract}

\vspace{-16pt}
\section{Introduction}
\label{sec:intro}
\vspace{-4pt}

The $F_\beta$-measure is a widely used performance measure for multi-label classification (MLC) problems. 
In particular, in an MLC problem, multiple labels can be active in an instance simultaneously; a good example is that of image tagging, where several tags (such as \texttt{sky}, \texttt{sand}, \texttt{water}) can be active in the same image. 
In such problems, when evaluating the performance of a classifier on a particular instance, it is important to balance the \emph{recall} of the classifier on the given instance, i.e.\ the fraction of active labels for that instance that are correctly predicted as such, and the \emph{precision} of the classifier on the instance, i.e.\ the fraction of labels predicted to be active for that instance that are actually so. The $F_\beta$-measure accomplishes this by taking the (possibly weighted) harmonic mean of these two quantities. 

Unfortunately, as with most discrete prediction problems, optimizing the $F_\beta$-measure directly during training is computationally hard. 
Consequently, one generally settles for some form of approximation. 
One approach is to simply treat the labels as independent, and train a separate binary classifier for each label; this is sometimes referred to as the \emph{binary relevance} (BR) approach. Of course, this ignores the fact that labels can have correlations among them (e.g.\ \texttt{sky} and \texttt{cloud} may be more likely to co-occur than \texttt{sky} and \texttt{computer}). Several other approaches have been proposed in recent years \cite{Dembczynski2013,Koyejo2015,Wu2017,Pillai2017}.

In this paper, we turn to the theory of convex calibrated surrogate losses -- which has yielded convex risk minimization algorithms for several other discrete prediction problems in recent years \cite{Bartlett+06,Zhang04b,TewariBa07,Steinwart07,Duchi+10,Gao2013,Ramaswamy+14,Ramaswamy+15} -- to design principled surrogate risk minimization algorithms for the multi-label $F_\beta$-measure.
In particular, for an MLC problem with $s$ tags, the total number of possible labelings of an instance is $2^s$ 
(each tag can be active or inactive). 
Viewing the $F_\beta$-measure as (one minus) a $2^s \times 2^s$ loss matrix, we show that this matrix has rank at most $s^2+1$, and apply the results of 
\citet{Ramaswamy+14} to design an output coding scheme that reduces the $F_\beta$ learning problem to a set of $s^2+1$ binary class probability estimation (CPE) problems. By using a suitable binary surrogate risk minimization algorithm (such as binary logistic regression) 
for these binary problems, 
we effectively construct a $(s^2+1)$-dimensional convex calibrated surrogate loss for the $F_\beta$-measure. 
We also 
give a quantitative regret transfer bound for the constructed surrogate, which allows us to transfer any regret guarantees for the binary subproblems to guarantees on $F_\beta$-regret for the overall MLC problem. In particular, this means that 
using a consistent learner for the binary problems yields a consistent learner for the MLC problem (whose $F_\beta$-regret goes to zero as the training sample size increases). 

Our algorithm is related to the plug-in algorithm of \citet{Dembczynski2013}, which also estimates $s^2+1$ statistics of the underlying distribution. \citet{Dembczynski2013} estimate these statistics by reducing the $F_\beta$ maximization problem to $s$ multiclass CPE problems, each with at most $s+1$ classes (plus one binary CPE problem); we do so by reducing the problem to $s^2+1$ binary CPE problems.
As we show, both algorithms effectively estimate the same $s^2+1$ statistics, and indeed, both perform similarly in experiments. 
Interestingly, 
the algorithm of \citet{Dembczynski2013}, while motivated primarily by the plug-in approach, can also be viewed as minimizing a certain convex calibrated surrogate loss (different from ours); conversely, our algorithm, while motivated primarily by the convex calibrated surrogates approach, can also be viewed as a plug-in algorithm.
Our study brings out interesting connections between the two approaches; in addition, to the best our knowledge, our analysis is the first to provide a quantitative regret transfer bound for calibrated surrogates for the $F_\beta$-measure.

\textbf{Organization.}
\Sec{sec:related-work} discusses related work.
\Sec{sec:prelim} gives preliminaries and background.
\Sec{sec:surrogates} gives our convex calibrated surrogates for the $F_\beta$-measure; \Sec{sec:regret-bound} provides a regret transfer bound for them.
\Sec{sec:relationship} discusses the relationship with the plug-in algorithm of \citet{Dembczynski2013}.
\Sec{sec:expts} summarizes our experiments.

\vspace{-8pt}
\section{Related Work}
\label{sec:related-work}
\vspace{-4pt}

There has been much work on multi-label learning, learning with the $F_\beta$-measure, and convex calibrated surrogates. Below we briefly discuss work that is most related to our study. For detailed surveys on multi-label learning, we refer the reader to \citet{Zhang2014} and \citet{Pillai2017}.


\textbf{Bayes optimal multi-label classifiers.}
In an elegant study, \citet{Dembczynski2011} studied in detail the form of a Bayes optimal multi-label classifier for the $F_1$-measure. In particular, they showed that, for an $s$-label MLC problem, given a certain set of $s^2+1$ statistics of the true conditional label distribution (distribution over $2^s$ labelings), one can compute a Bayes optimal classifier for the $F_1$-measure in $O(s^3)$ time. 
Their result extends to general $F_\beta$-measures.
Bayes optimal classifiers have also been studied for other MLC performance measures, such as Hamming loss and subset 0-1 loss \cite{Dembczynski2010}.

\textbf{Consistent algorithms for multi-label learning.}
\citet{Dembczynski2013} extended and operationalized the results of \citet{Dembczynski2011} by providing a consistent plug-in MLC algorithm for the $F_\beta$-measure.
Specifically, they showed that the $s^2+1$ statistics of the conditional label distribution needed to compute a Bayes optimal classifier can be estimated via $s$ multiclass CPE problems, each with at most $s+1$ classes, plus one binary CPE problem; the statistics estimated by solving these CPE problems can then be plugged into the $O(s^3)$-time procedure of \citet{Dembczynski2011} to produce a consistent plug-in algorithm termed the \emph{exact F-measure plug-in} (EFP) algorithm.
Consistent learning algorithms have also been studied for other multi-label performance measures \cite{Gao2013,Koyejo2015}.\footnote{Note that while the study of \citet{Koyejo2015} also includes the $F_\beta$-measure (among other performance measures), their study is in the context of what has been referred to as the `expected utility maximization' (EUM) framework; in contrast, our study is in the context of what has been referred to as the `decision-theoretic analysis' (DTA) framework. Their results are generally incomparable to ours. (In particular, under the EUM framework, \citet{Koyejo2015} showed that a thresholding approach leads to Bayes optimal performance; on the contrary, under the DTA framework, it was shown by \citet{Dembczynski2011} that a thresholding approach cannot be optimal for general distributions.)}
The simple approach of learning an independent binary classifier for each of the $s$ labels, known as \emph{binary relevance} (BR), is known to yield a consistent algorithm for the Hamming loss; it also yields a consistent algorithm for the $F_\beta$-measure under the assumption of conditionally independent labels, but can be arbitrarily bad otherwise \cite{Dembczynski2011}.

\textbf{Large-margin algorithms for multi-label learning.}
Several studies have considered large-margin algorithms for multi-label learning with the $F_\beta$-measure. These include the \emph{reverse multi-label} (RML) and \emph{sub-modular multi-label} (SML) algorithms of \citet{PettersonCa10,PettersonCa11}, which make use of the StructSVM framework \cite{Tsochantiridis+05}, and more recently, the \emph{label-wise and instance-wise margin optimization} (LIMO) algorithm due to \citet{Wu2017}, which aims to simultaneously optimize several different multi-label performance measures. The RML and SML algorithms were proven to be inconsistent for the $F_\beta$-measure and shown to be outperformed by the EFP algorithm by \citet{Dembczynski2013}. We include a comparison with LIMO in our experiments.

\textbf{Multivariate $F_\beta$-measure for binary classification.}
The $F_\beta$-measure is also used as a multivariate performance measure in binary classification tasks with significant class imbalance. This use of the $F_\beta$-measure is related to, but distinct from, the use of the $F_\beta$-measure in MLC problems. Several approaches have been proposed that aim to optimize the multivariate $F_\beta$-measure in binary classification \cite{Joachims05,Ye2012,Parambath2014}.


\textbf{Convex calibrated surrogates.}
Convex surrogate losses are frequently used in machine learning to design computationally efficient learning algorithms.
The notion of calibrated surrogate losses, which ensures that minimizing the surrogate loss can (in the limit of sufficient data) recover a Bayes optimal model for the target discrete loss, was initially studied in the context of binary classification \cite{Bartlett+06,Zhang04a} and multiclass 0-1 classification \cite{Zhang04b,TewariBa07}. In recent years, calibrated surrogates have been designed for several more complex learning problems, including general multiclass problems and certain types of subset ranking and multi-label problems \cite{Steinwart07,Duchi+10,Gao2013,Ramaswamy+13,Ramaswamy+14,Ramaswamy+15}. 
In our work, we will make use of a result of \citet{Ramaswamy+14}, who designed convex calibrated surrogates based on output coding for multiclass problems with low-rank loss matrices.

\vspace{-4pt}
\section{Preliminaries and Background}
\label{sec:prelim}

\subsection{Problem Setup}

\textbf{Multi-label classification (MLC).}
In 
an MLC problem, there is an instance space $\X$, and a set of $s$ labels or `tags' $\cL=[s]:=\{1,\ldots,s\}$ that can be associated with each instance in $\X$. For example, in image tagging, $\X$ is the set of possible images, and $\cL$ is a set of $s$ pre-defined tags (such as \texttt{sky}, \texttt{cloud}, \texttt{water} etc) that can be associated with each image. 
The learner is given a training sample $S=\{(x_1,\y_1),\ldots,(x_m,\y_m)\} \in (\X\times\{0,1\}^s)^m$, 
where the labeling $\y_i\in\{0,1\}^s$ indicates 
which of the $s$ tags are active in instance $x_i$ (specifically, $y_{ij}=1$ denotes that tag $j$ is active in instance $x_i$, and $y_{ij}=0$ denotes it is inactive).
The goal is to learn from these examples a multi-label classifier $\h:\X\>\{0,1\}^s$ which, given a new instance $x\in\X$, predicts which tags are active or inactive via $\h(x)\in\{0,1\}^s$.

\textbf{$F_\beta$-measure.}
For any $\beta>0$, the $F_\beta$-measure evaluates the quality of an MLC prediction as follows. Given a true labeling $\y\in\{0,1\}^s$ and a predicted labeling $\hat{\y}\in\{0,1\}^s$, the recall and precision are given by
\vspace{-4pt}
\begin{eqnarray*}
\text{rec}(\y,\hat{\y}) = \frac{\sum_{j=1}^s y_j \hat{y}_j}{\|\y\|_1}
\,; 
& \hspace{-8pt} &
\text{prec}(\y,\hat{\y}) = \frac{\sum_{j=1}^s y_j \hat{y}_j}{\|\hat{\y}\|_1}
\,.
\end{eqnarray*}
In words, the recall measures the fraction of active tags that are predicted correctly, and the precision measures the fraction of tags predicted as active that are actually so. The $F_\beta$-measure balances these two quantities by taking their (weighted) harmonic mean:
\vspace{-4pt}
\begin{eqnarray}
F_\beta(\y,\hat{\y}) 
	& \hspace{-8pt} = & \hspace{-8pt} 
	\bigg( 
		\Big( {\textstyle{\frac{\beta^2}{1 + \beta^2}}} \Big) \frac{1}{\text{rec}(\y,\hat{\y})} \, +
		\Big( {\textstyle{\frac{1}{1+\beta^2}}} \Big) \frac{1}{\text{prec}(\y,\hat{\y})} 
	\bigg)^{-1}
\nonumber
\\
	& \hspace{-8pt} = &  \hspace{-8pt}
	\frac{(1+\beta^2) \sum_{j=1}^s y_j \hat{y}_j}{\beta^2 \|\y\|_1 + \|\hat{\y}\|_1}
	\,.
\label{eqn:F-beta}
\end{eqnarray}
Clearly, $0 \leq F_\beta(\y,\hat{\y}) \leq 1$.
Higher values of the $F_\beta$-measure correspond to better quality predictions. We will take $\frac{0}{0}=1$, so that when $\y=\hat{\y}=\0$, 
we have
$F_\beta(\0,\0) = 1$.
The most commonly used instantiation is the $F_1$-measure, which weighs recall and precision equally; other commonly used variants include the $F_2$-measure, which weighs recall more heavily than precision, and the $F_{0.5}$-measure, which weighs precision more heavily than recall.

\textbf{Learning goal.}
Assuming that training examples are drawn IID from some underlying probability distribution $D$ on $\X\times\{0,1\}^s$, it is natural then to measure the quality of a multi-label classifier $\h:\X\>\{0,1\}^s$ by its \emph{$F_\beta$-generalization accuracy}:\footnote{Note that our focus is on \emph{instance-averaged} $F_\beta$ performance \cite{Zhang2014}.}
\vspace{-6pt}
\[
\acc^{F_\beta}_D[\,\h\,]
	~ = ~
	\E_{(x,\y)\sim D} [\, F_\beta(\y,\h(x)) \,]
	\,.
\vspace{-2pt}	
\]
The \emph{Bayes $F_\beta$-accuracy} is then the highest possible value of the $F_\beta$-generalization accuracy for $D$:
\vspace{-4pt}
\[
\acc^{F_\beta,*}_D 
	~ = ~
	\sup_{\h:\X\>\{0,1\}^s} \acc^{F_\beta}_D[\,\h\,]
	\,.
\vspace{-2pt}	
\]
The \emph{$F_\beta$-regret} of a multi-label classifier $\h$ is then the difference between the Bayes $F_\beta$-accuracy and the $F_\beta$-accuracy of $\h$:
\vspace{-4pt}
\[
\regret^{F_\beta}_D[\,\h\,]
	~ = ~
	\acc^{F_\beta,*}_D 
	-
	\acc^{F_\beta}_D[\,\h\,]
	\,.
\vspace{-2pt}	
\] 
Our goal will be to design \emph{consistent} algorithms for the $F_\beta$-measure, i.e.\ algorithms whose $F_\beta$-regret converges (in probability) to zero as the number of training examples increases. In particular, since we cannot maximize the (discrete) $F_\beta$-measure directly, we would like to design consistent algorithms that maximize a concave surrogate performance measure -- or equivalently, minimize a convex surrogate loss -- instead. For this, we will turn to the theory of convex calibrated surrogates.

\vspace{-2pt}
\subsection{Convex Calibrated Surrogates for Multiclass Problems}
\vspace{-2pt}

Here we review the theory of convex calibrated surrogates for multiclass 
classification problems, and in particular, the result of \citet{Ramaswamy+14} for low-rank multiclass loss matrices that we will use in our work. 
We will apply the theory to the multi-label $F_\beta$-measure in \Sec{sec:surrogates}. 

\textbf{Multiclass classification.}
Consider a standard multiclass (not multi-label) learning problem with instance space $\X$ and label space $\Y=[n]$
(i.e., $n$ classes).
Let $\L\in\R_+^{n\times n}$ be a loss matrix whose $(y,\hat{y})$-th entry $\ell_{y,\hat{y}} = \ell(y,\hat{y})$ (for each $y,\hat{y}\in[n]$) specifies the loss incurred on predicting $\hat{y}$ when the true label is $y$ (the 0-1 loss $\L^{\zo}$ is a special case with $\ell^\zo_{y,\hat{y}} = \1(\hat{y}\neq y)$). Then, given a training sample $S=((x_1,y_1),\ldots,(x_m,y_m))\in(\X\times\Y)^m$ with examples drawn IID from some underlying probability distribution $D$ on $\X\times\Y$, the performance of a classifier $h:\X\>\Y$ is measured by its $\L$-generalization error $\er_D^\L[h]=\E_{(x,y)\sim D}[\, \ell_{y,h(x)} \,]$, or its $\L$-regret $\regret_D^\L[h] = \er_D^\L[h] - \er_D^{\L,*}$, where $\er_D^{\L,*} = \inf_{h:\X\>\Y} \er_D^\L[h]$ is the Bayes $\L$-error for $D$. 
A learning algorithm that maps training samples $S$ to classifiers $h_S$ is said to be (universally) \emph{$\L$-consistent} if for all $D$ and for $S\sim D^m$, $\regret_D^\L[h_S] \ip 0$ as $m\>\infty$.

\textbf{Surrogate risk minimization and calibrated surrogates.}
Since minimizing the discrete loss $\L$ directly is computationally hard, a common algorithmic framework is 
to minimize a surrogate loss $\psi:[n]\times\R^d\>\R_+$ for some suitable $d\in\Z_+$.
In particular, given a multiclass training sample $S$ as above, one learns a $d$-dimensional `scoring' function $\f_S:\X\>\R^d$ by solving 
\vspace{-2pt}
\[
\textstyle{\min_{\f} \sum_{i=1}^m \psi(y_i,\f(x_i))}
\vspace{-2pt}
\]
over a suitably rich class of functions $\f:\X\>\R^d$; and then returns $h_S = \decode \circ \f_S$ for some suitable mapping $\decode:\R^d\>[n]$.
In practice, the surrogate $\psi$ is often chosen to be convex in its second argument to enable efficient minimization.
It is known that if the minimization is performed over a universal function class (with suitable regularization), then the resulting algorithm is universally \emph{$\psi$-consistent},
i.e.\ that the $\psi$-regret converges to zero: $\regret_D^\psi[\f_S] = \er_D^\psi[\f_S] - \er_D^{\psi,*} \ip 0$ as $m\>\infty$ (where $\er_D^\psi[\f] = \E_{(x,y)\sim D} [\, \psi(y,\f(x)) \,]$ is the $\psi$-generalization error of $\f$ and $\er_D^{\psi,*} = \inf_{\f:\X\>\R^d} \er_D^\psi[\f]$ is the Bayes $\psi$-error).
The surrogate $\psi$, together with the mapping $\decode$, is said to be \emph{$\L$-calibrated} if this also implies $\L$-consistency, i.e.\ if 
\vspace{-4pt}
\[
\regret_D^\psi[\f_S] \ip 0
	\implies 
	\regret_D^\L[\decode \circ \f_S] \ip 0
	\,.
\vspace{-1pt}
\]
Thus, given a target loss $\L$, the task of designing an $\L$-consistent algorithm reduces to designing a convex $\L$-calibrated surrogate-mapping pair $(\psi,\decode)$; the resulting surrogate risk minimization algorithm (implemented in a universal function class with suitable regularization) is then universally $\L$-consistent.


\textbf{Result of \citet{Ramaswamy+14} for low-rank loss matrices.}
The result of \citet{Ramaswamy+14} effectively decomposes multiclass problems into a set of binary CPE problems; to describe the result, we will need the following definition for \emph{binary} losses:

\begin{defn}[Strictly proper composite binary losses \cite{ReidWi10}]
A binary loss $\phi:\{\pm1\}\times\R\>\R_+$ is \emph{strictly proper composite with underlying (invertible) link function $\gamma:[0,1]\>\R$} if for all $q\in[0,1]$ and $u\neq\gamma(q)\in\R$:
\vspace{-5pt}
\[
\E_{y\sim {\text{Bin}}^{\pm1}(q)}\Big[ \phi(y,u) - \phi(y,\gamma(q)) \Big] ~ > ~ 0
	\,,
\]
where $y\sim {\text{Bin}}^{\pm1}(q)$ denotes a $\{\pm1\}$-valued random variable that takes value $+1$ with probability $q$ and value $-1$ with probability $1-q$.
\end{defn}

Intuitively, minimizing a strictly proper composite binary loss allows one to recover accurate class probability estimates for binary CPE problems: the learned real-valued score is simply inverted via $\gamma^{-1}$ \cite{ReidWi10}.

We can now state the result of \citet{Ramaswamy+14}, which for multiclass loss matrices $\L$ of rank $r$, gives
a family of $r$-dimensional convex $\L$-calibrated surrogates defined in terms of strictly proper composite binary losses as follows 
(result specialized here to the case of square loss matrices, and stated with a small change in normalization):

\begin{thm}[\citet{Ramaswamy+14}]
\label{thm:Ramaswamy+14}
Let $\L\in\R_+^{n\times n}$ be a rank-$r$ multiclass loss matrix, with 
$\ell_{y,\hat{y}} = \a_y^\top \b_{\hat{y}}$ for some $\a_1,\ldots,\a_n,\b_1,\ldots,\b_n \in \R^r$.
Let $\phi:\{\pm1\}\times\R\>\R_+$ be any strictly proper composite binary loss, with underlying link function $\gamma:[0,1]\>\R$. 
Define a multiclass surrogate $\psi:[n]\times\R^r\>\R_+$ and mapping $\decode:\R^r\>[n]$ as follows:
\vspace{-6pt}
\[
\psi(y,\u) ~ = ~ \sum_{j=1}^r \Big( \tilde{a}_{yj} \phi(+1,u_j) + (1-\tilde{a}_{yj}) \phi(-1,u_j) \Big)
\]
\vspace{-12pt}
\[
\decode(\u) ~ \in ~ \argmin_{\hat{y}\in[n]} \sum_{j=1}^r \tilde{b}_{\hat{y}j} \gamma^{-1}(u_j) + c_{\hat{y}}
	\,, 
\vspace{-2pt}
\]
where 
\vspace{-8pt}
\begin{eqnarray*}
\tilde{a}_{yj} & = & {\frac{a_{yj}-a_{\min}}{a_{\max}-a_{\min}}} ~~~ (\in[0,1]) \\
\tilde{b}_{\hat{y}j} & = & (a_{\max} - a_{\min}) \cdot b_{\hat{y}j} \\
c_{\hat{y}} & = & a_{\min} \, \textstyle{\sum_{j=1}^r} b_{\hat{y}j} \\
a_{\min} & = & \min_{y,j} a_{yj} \\
a_{\max} & = & \max_{y,j} a_{yj} \,.
\end{eqnarray*}
Then $(\psi,\decode)$ is $\L$-calibrated.
\end{thm}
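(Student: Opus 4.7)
The plan is to reduce the calibration claim to a pointwise analysis of conditional risks and then lift it to the stated convergence-in-probability implication via a standard argument (e.g.\ in the style of \citet{Steinwart07}). Fix $x\in\X$ and let $\p=(p_1,\ldots,p_n)$ be the conditional distribution of $y$ given $x$. Write the conditional $\psi$-risk and conditional $\L$-risk as $W^\psi(\p,\u) = \sum_y p_y \psi(y,\u)$ and $W^\L(\p,\hat{y}) = \sum_y p_y \ell_{y,\hat{y}}$. It then suffices to prove the pointwise calibration statement: for every $\p$, any sequence $\u^{(k)}\in\R^r$ with $W^\psi(\p,\u^{(k)}) \to \inf_\u W^\psi(\p,\u)$ must satisfy $W^\L(\p,\decode(\u^{(k)})) \to \min_{\hat{y}} W^\L(\p,\hat{y})$.

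The crux is two algebraic rewritings, both keyed on the scalars $q_j(\p) = \sum_y p_y \tilde{a}_{yj} \in [0,1]$. Swapping the order of summation in the definition of $\psi$ gives
\[
W^\psi(\p,\u) \;=\; \sum_{j=1}^r \Big( q_j(\p)\,\phi(+1,u_j) + (1-q_j(\p))\,\phi(-1,u_j) \Big),
\]
so the conditional $\psi$-risk decouples into $r$ independent strictly proper binary scoring problems with Bernoulli parameters $q_j(\p)$; strict properness of $\phi$ then identifies the unique coordinate-wise minimizer $u_j^*(\p) = \gamma(q_j(\p))$, i.e.\ $\gamma^{-1}(u_j^*(\p)) = q_j(\p)$. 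For $W^\L$, using $a_{yj} = a_{\min} + (a_{\max}-a_{\min})\tilde{a}_{yj}$,
\[
W^\L(\p,\hat{y}) \;=\; \sum_{j=1}^r b_{\hat{y}j} \sum_y p_y a_{yj} \;=\; c_{\hat{y}} + \sum_{j=1}^r \tilde{b}_{\hat{y}j}\, q_j(\p),
\]
so the conditional $\L$-risk is an affine functional of the same quantities, and the Bayes-optimal set at $\p$ is $H^*(\p) = \argmin_{\hat{y}}\big[c_{\hat{y}} + \sum_j \tilde{b}_{\hat{y}j}\, q_j(\p)\big]$. Comparing with the definition of $\decode$, it follows immediately that $\decode(\u^*(\p)) \in H^*(\p)$: the exact $\psi$-minimizer decodes to a Bayes-optimal label.

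The main obstacle, as in most calibration arguments, is passing from this identity for exact minimizers to one for approximate minimizers. Given $\u^{(k)}$ with $\psi$-conditional regret vanishing, the plan is to use strict properness of $\phi$ to extract coordinate-wise control of the form $|\gamma^{-1}(u_j^{(k)}) - q_j(\p)| \to 0$: strict properness yields a calibration function $\delta_\phi$ with $\delta_\phi(\eta)>0$ for $\eta>0$ such that any binary proper-scoring regret of at most $\epsilon$ forces the implied probability estimate into some $\eta(\epsilon)$-neighborhood of the true parameter with $\eta(\epsilon)\to 0$. Since $\decode$ minimizes the affine functional $c_{\hat{y}} + \sum_j \tilde{b}_{\hat{y}j}\gamma^{-1}(u_j^{(k)})$ over the \emph{finite} set $[n]$, continuity of $\argmin$ of a linear objective in its coefficients forces $\decode(\u^{(k)})\in H^*(\p)$ for large $k$ whenever $H^*(\p)$ is a singleton; ties are harmless since each tied label already lies in $H^*(\p)$. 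This establishes the pointwise calibration displayed above, which by the standard pointwise-to-distributional lifting yields $\L$-calibration of $(\psi,\decode)$.
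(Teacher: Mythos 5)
Your argument is essentially sound, but note that the paper does not actually prove this theorem --- it is imported verbatim from \citet{Ramaswamy+14}, and what you have reconstructed is essentially their original argument: decompose the conditional $\psi$-risk into $r$ independent binary proper-composite problems with parameters $q_j(\p)=\sum_y p_y\tilde a_{yj}$, observe the matching affine representation $W^\L(\p,\hat y)=c_{\hat y}+\sum_{j}\tilde b_{\hat y j}\,q_j(\p)$ of the conditional target risk, and close with a gap argument over the finite set $[n]$. The nearest thing this paper offers as its own proof of calibration is the proof of \Thm{thm:regret-bound}, which takes a genuinely different route: it assumes the stronger hypothesis of $\lambda$-\emph{strong} proper compositeness and derives a quantitative bound, first controlling the target regret by $2\max_{\hat\y}\|\b_{\hat\y}\|_2\cdot\E_x\big[\|\q(x)-\gamma^{-1}(\f(x))\|_2\big]$ via the definition of $\decode$ and Cauchy--Schwarz, and then bounding the $L_2$ estimation error by $\sqrt{(2/\lambda)\,\regret_D^\psi[\f]}$. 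Your route buys calibration under mere strict properness (matching the theorem as stated) but yields no rate; the paper's route buys an explicit regret transfer bound at the price of strong properness. Two steps in your write-up should be made explicit for a complete proof: (i) strict properness alone gives only pointwise positivity of the binary excess risk, so to obtain a calibration function with $\delta_\phi(\eta)>0$ you need continuity of the excess risk in the reparametrization $p'=\gamma^{-1}(u)\in[0,1]$ together with compactness of $\{p':|p'-q|\ge\eta\}$; and (ii) the ``standard lifting'' from pointwise to distributional calibration requires uniformity of $\delta_\phi$ over $\p$ in the probability simplex, which for finite $n$ follows from a separate compactness lemma (pointwise calibration implies uniform calibration for finite label spaces) that should be invoked explicitly. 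Neither is a fatal gap; both are standard and are handled in \citet{Ramaswamy+14}.
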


The above result effectively decomposes the multiclass problem into $r$ binary CPE problems, where the labels for these CPE problems can themselves be given as probabilities in $[0,1]$ rather than binary values (see \citet{Ramaswamy+14} for details). 
For our purposes, we will use the standard binary logistic loss for the binary CPE problems, which is known to be strictly proper composite (see \Sec{sec:surrogates} below for more details).

\vspace{-4pt}
\section{Convex Calibrated Surrogates for $F_\beta$}
\label{sec:surrogates}
\vspace{-2pt}

In order to construct convex calibrated surrogates -- and corresponding surrogate risk minimization algorithms -- for the multi-label $F_\beta$-measure, we will start by viewing the multi-label learning problem as a giant multiclass classification problem with $n=2^s$ classes (this is only for the purpose of analysis and derivation of the surrogates; as we will see, the actual algorithms we will obtain will require learning only $O(s^2)$ real-valued score functions).
To this end, let us define the $F_\beta$-loss matrix $\L^{F_\beta}\in\R_+^{\{0,1\}^s \times \{0,1\}^s}$ as follows:
\vspace{-2pt}
\[
\ell^{F_\beta}_{\y,\hat{\y}} ~ = ~ 1 - F_\beta(\y,\hat{\y})
	\,.
\]

\textbf{$\L^{F_\beta}$ has low rank.}
We show here that (a slightly shifted version of) the above loss matrix has rank at most $s^2+1$.

\begin{prop}
\label{prop:rank-F}
$\rank(\L^{F_\beta}-1) \leq s^2 + 1$.
\end{prop}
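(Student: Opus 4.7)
Writing $M \in \R^{\{0,1\}^s \times \{0,1\}^s}$ for the matrix with entries $M_{\y,\hat{\y}} = F_\beta(\y,\hat{\y})$, the first step is to observe that $\L^{F_\beta} - \1 = -M$ entrywise (where $\1$ is the all-ones matrix), so $\rank(\L^{F_\beta} - \1) = \rank(M)$. It therefore suffices to exhibit $s^2+1$ vectors in $\R^{2^s}$ (indexed by $\y$) that span the column space of $M$.

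The main idea is to split the columns of $M$ according to $k := \|\hat{\y}\|_1 \in \{0,1,\ldots,s\}$ and use the explicit form in \Eqn{eqn:F-beta}. For $k \geq 1$, plugging $\sum_j y_j \hat{y}_j$ into the numerator of $F_\beta(\y,\hat{\y})$ and noting that the denominator $\beta^2\|\y\|_1 + k$ depends only on $\y$ and $k$, the $\hat{\y}$-th column of $M$ can be written as
\[
M_{\cdot,\hat{\y}} ~=~ \sum_{j=1}^s \hat{y}_j \, \v^{(k,j)},
\qquad \text{where } \v^{(k,j)}_\y := \frac{(1+\beta^2)\, y_j}{\beta^2 \|\y\|_1 + k}.
\]
Thus every column with $\|\hat{\y}\|_1 = k \geq 1$ lies in the span of the $s$ vectors $\{\v^{(k,1)},\ldots,\v^{(k,s)}\}$. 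For the remaining column indexed by $\hat{\y} = \0$, I would note that $F_\beta(\y,\0) = 0$ for all $\y \neq \0$ and $F_\beta(\0,\0) = 1$ by the convention $\tfrac{0}{0}=1$, so this column equals $\mathbf{e}_\0$, the indicator vector of $\y = \0$.

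Combining these two cases, the column space of $M$ is contained in the span of
\[
\{\v^{(k,j)} : k,j \in [s]\} \,\cup\, \{\mathbf{e}_\0\},
\]
a set of $s^2 + 1$ vectors, which gives $\rank(M) \leq s^2 + 1$ as claimed. The only thing requiring care is the edge behavior: one needs to verify that $\v^{(k,j)}_\0 = 0$ (it is, because $y_j = 0$ when $\y = \0$) so that the $\y = \0$ row of $M$ is correctly reproduced as $0$ for $\hat{\y} \neq \0$ and as $1$ for $\hat{\y} = \0$. I expect this edge-case bookkeeping — rather than the linear-algebraic decomposition itself — to be the only mildly delicate part of the argument.
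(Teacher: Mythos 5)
Your proof is correct, and it is the same underlying idea as the paper's -- write $-F_\beta$ as a sum of $s^2+1$ rank-one terms by stratifying over the number of active tags -- but you apply it to the transposed index: you stratify the \emph{columns} over $k=\|\hat{\y}\|_1$, whereas the paper stratifies the \emph{rows} over $k=\|\y\|_1$, writing $\ell^{F_\beta}_{\y,\hat{\y}}-1 = a_{\y,0}b_{\hat{\y},0}+\sum_{j,k}a_{\y,jk}b_{\hat{\y},jk}$ with $a_{\y,jk}=\1(\|\y\|_1=k)\,y_j$ and $b_{\hat{\y},jk}=-(1+\beta^2)\hat{y}_j/(\beta^2 k+\|\hat{\y}\|_1)$. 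By the symmetry of \Eqn{eqn:F-beta} the two decompositions are equally valid for the rank bound, and your handling of the $\hat{\y}=\0$ column (including the check that $\v^{(k,j)}_\0=0$) is fine. The difference does matter downstream, though: in \Thm{thm:Ramaswamy+14} it is the $\a_\y$ factor (indexed by the \emph{true} label) that determines the labels of the induced binary CPE problems, so the paper's choice of putting the $0$--$1$ indicators $\1(\|\y\|_1=k)\,y_j$ on the $\y$ side is what makes the $s^2+1$ subproblems ordinary binary CPE problems whose target probabilities are exactly the statistics $\P(\|\y\|_1=k,\,y_j=1\,|\,x)$ of \citet{Dembczynski2011}. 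Your transposed factorization would instead yield CPE problems with real-valued labels $(1+\beta^2)y_j/(\beta^2\|\y\|_1+k)\in[0,1]$ -- still admissible under \Thm{thm:Ramaswamy+14}, but a different surrogate and a less clean connection to the EFP algorithm in \Sec{sec:relationship}.
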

\vspace{-10pt}
\begin{proof}
We have,
\[
\ell^{F_\beta}_{\y,\hat{\y}} - 1 
	~ = ~
	- F_\beta(\y,\hat{\y})
	~ = ~
	-\frac{(1+\beta^2) \sum_{j=1}^s y_j \hat{y}_j}{\beta^2 \|\y\|_1 + \|\hat{\y}\|_1}
	\,.
\]
Stratifying over the $s+1$ different values of $\|\y\|_1 \in\{0,1\ldots,s\}$, we can write this as
\begin{eqnarray*}
\ell^{F_\beta}_{\y,\hat{\y}} - 1 
	& = &
	- \, \1(\|\y\|_1=0) \cdot \1(\|\hat{\y}\|_1=0) 
\\
	& &
	- \, \sum_{k=1}^s \1(\|\y\|_1=k) \cdot \frac{(1+\beta^2) \sum_{j=1}^s y_j \hat{y}_j}{\beta^2 k + \|\hat{\y}\|_1}
\\
	& = &
	a_{\y,0} \cdot b_{\hat{\y},0} + \sum_{j=1}^s \sum_{k=1}^s a_{\y,jk} \cdot b_{\hat{\y},jk}
	\,,
\end{eqnarray*}
where
\begin{eqnarray}
a_{\y,0} & = & \1(\|\y\|_1=0)
\label{eqn:a}
\\
b_{\hat{\y},0} & = & - \1(\|\hat{\y}\|_1=0)
\\
a_{\y,jk} & = & \1(\|\y\|_1=k) \cdot y_j 
\\
b_{\hat{\y},jk} & = & - \frac{(1+\beta^2) \cdot \hat{y}_j}{\beta^2 k + \|\hat{\y}\|_1}
\label{eqn:b}
	\,.
\end{eqnarray}
This proves the claim.
\end{proof}

\textbf{$\L^{F_\beta}$-calibrated surrogates.}
Given the above result, we can now apply \Thm{thm:Ramaswamy+14} to construct a family of $(s^2+1)$-dimensional convex calibrated surrogate losses for $\L^{F_\beta}$.\footnote{Note that minimizing the $\L^{F_\beta}$-generalization error is equivalent to minimizing the $(\L^{F_\beta}-1)$-generalization error, and therefore a calibrated surrogate for $\L^{F_\beta}-1$ is also calibrated for $\L^{F_\beta}$.}
Specifically, starting with any strictly proper composite binary loss $\phi:\{\pm1\}\times\R\>\R_+$ with underlying link function $\gamma:[0,1]\>\R$, we
define a multiclass surrogate $\psi:\{0,1\}^s\times\R^{s^2+1}\>\R_+$ and mapping $\decode:\R^{s^2+1}\>\{0,1\}^s$ as follows (where we denote $\u = \big( u_0,(u_{jk})_{j,k=1}^s \big)^\top \in \R^{s^2+1}$):
\begin{eqnarray}
\lefteqn{\psi(\y,\u)}
\nonumber
\\
	& \hspace{-12pt} = & \hspace{-8pt} 
	a_{\y,0} \cdot \phi(+1,u_0) + (1-a_{\y,0}) \cdot \phi(-1,u_0) 
\nonumber
\\
	& \hspace{-12pt} & \hspace{-10pt} 
	+ \sum_{j=1}^s \sum_{k=1}^s a_{\y,jk} \cdot \phi(+1,u_{jk}) + (1-a_{\y,jk}) \cdot \phi(-1,u_{jk}) 
\nonumber
\\
\label{eqn:psi}
\\
\lefteqn{\decode(\u)}
\nonumber
\\[-2pt]
	& \hspace{-12pt}  \in & \hspace{-8pt} 
	\argmin_{\hat{\y}\in\{0,1\}^s} \,\, b_{\hat{\y},0}\cdot \gamma^{-1}(u_0) + \sum_{j=1}^s \sum_{k=1}^s b_{\hat{\y},jk} \cdot \gamma^{-1}(u_{jk}) 
	\,,
\nonumber
\\[-2pt]
\label{eqn:decode}
\end{eqnarray}
where $a_{\y,0}, a_{\y,jk}, b_{\hat{\y},0}, b_{\hat{\y},jk}$  are as defined in \Eqs{eqn:a}{eqn:b}.
Then, by \Thm{thm:Ramaswamy+14} and the proof of \Prop{prop:rank-F}, it follows that $(\psi,\decode)$ is $\L^{F_\beta}$-calibrated.\footnote{Note that when applying \Thm{thm:Ramaswamy+14} here, we have $a_{\min}=0$ and $a_{\max}=1$, and therefore $\tilde{\a}_\y = \a_\y$, $\tilde{\b}_{\hat{\y}} = \b_{\hat{\y}}$, and $c_{\hat{\y}} = 0$.}
Therefore, the resulting $(\psi,\decode)$-based surrogate risk minimization algorithm,
when implemented in a universal function class (with suitable regularization), is consistent for the $F_\beta$-measure. 
The algorithm is summarized in \Algo{algo}. Note that since $a_{\y,0}, a_{\y,jk} \in \{0,1\}$, in this case minimizing the surrogate risk above amounts to solving $s^2+1$ binary CPE problems with standard binary (non-probabilistic) labels.   

\textbf{Choice of strictly proper composite binary loss $\phi$.}
As a specific instantiation, in our experiments, we will make use of the binary logistic loss $\phi_{\log}:\{\pm1\}\times\R\>\R_+$ given by 
\begin{eqnarray}
\phi_{\log}(y,u) & = & \ln(1+e^{-yu})
\label{eqn:logistic}
\end{eqnarray}
as the binary loss above; this is known to be strictly proper composite \cite{ReidWi10}, with underlying logit link function $\gamma_{\log}:[0,1]\>\R$ given by 
\begin{eqnarray}
\gamma_{\log}(p) & = & \ln\Big(\frac{p}{1-p}\Big)
	\,.
\label{eqn:logit}
\end{eqnarray}

\begin{algorithm}[t]
\caption{Surrogate risk minimization algorithm for multi-label $F_\beta$-measure}
\begin{algorithmic}[1]
\STATE \textbf{Input:} 
	Training sample 
	$S=((x_1,\y_1),\ldots,(x_m,\y_m))$ $\in(\X\times\{0,1\}^s)^m$
\STATE \textbf{Parameters:} 
	(1) Strictly proper composite binary CPE loss $\phi:\{\pm1\}\times\R\>\R_+$; 
	(2) Class $\F$ of functions $\f:\X\>\R^{s^2+1}$
\STATE Find $\f_S\in\argmin_{\f\in\F} \sum_{i=1}^m \psi(\y_i,\f(x_i))$, where $\psi$ is as defined in \Eqn{eqn:psi}
\STATE \textbf{Output:} Multi-label classifier $\h_S = \decode \circ \f_S$, where $\decode$ is as defined in \Eqn{eqn:decode} (see Appendix for efficient implementation of $\decode$)
\end{algorithmic}
\label{algo}
\end{algorithm}

\textbf{Implementation of `$\decode$' mapping.}
The mapping $\decode:\R^{s^2+1}\>\{0,1\}^s$ above can be implemented in $O(s^3)$ time using a procedure due to \citet{Dembczynski2011}; details are provided in the Appendix for completeness. In particular, \citet{Dembczynski2011} show that if one knows the true conditional MLC distribution $p(\y|x)$, then one can use $s^2+1$ statistics of this distribution to construct a Bayes optimal classifier for the $F_\beta$-measure; they then provide a procedure to perform this computation in $O(s^3)$ time. 
As we discuss in greater detail in \Sec{sec:relationship}, our surrogate loss $\psi$ can be viewed as computing estimates of the same $s^2+1$ statistics from the training sample $S$, and therefore our algorithm, which applies the `decoding' procedure of \citet{Dembczynski2011} to these estimated quantities, can be viewed as effectively learning a form of `plug-in' multi-label classifier for the $F_\beta$-measure.

\section{Regret Transfer Bound}
\label{sec:regret-bound}

Above, we constructed a family of $\L^{F_\beta}$-calibrated surrogate-mapping pairs $(\psi,\decode)$ (\Eqs{eqn:psi}{eqn:decode}), yielding a family of surrogate risk minimization algorithms for the $F_\beta$-measure (\Algo{algo}).
We now give a quantitative regret transfer bound 
showing that any guarantees on the surrogate $\psi$-regret also translate to guarantees on the  target $F_\beta$-regret.
Specifically, the surrogate loss $\psi$ was defined in terms of a constituent strictly proper composite binary loss $\phi:\{\pm1\}\times\R\>\R_+$. We show that if the binary loss $\phi$ is \emph{strongly} proper composite (a relatively mild condition satisfied by several common strictly proper composite binary losses, including the logistic loss), then for all models $\f:\X\>\R^{s^2+1}$, we can upper bound $\regret_D^{F_\beta}[\decode \circ \f]$, the target $F_\beta$-regret of the multi-label classifier given by $\h(x) = \decode(\f(x))$, in terms of $\regret_D^\psi[\f]$, the surrogate regret of $\f$.
In order to prove the regret transfer bound, we 
will need the following definition:

\begin{defn}[Strongly proper composite binary losses \cite{Agarwal14}]
Let $\lambda > 0$. A binary loss $\phi:\{\pm1\}\times\R\>\R_+$ is said to be \emph{$\lambda$-strongly proper composite with underlying (invertible) link function $\gamma:[0,1]\>\R$} if for all $q\in[0,1]$, $u\in\R$:
\vspace{-4pt}
\[
\E_{y\sim {\text{Bin}}^{\pm1}(q)}\Big[ \phi(y,u) - \phi(y,\gamma(q)) \Big] ~ \geq ~ \frac{\lambda}{2} \Big( \gamma^{-1}(u) - q \Big)^2
	\,.
\]
\end{defn}

We note that 
the logistic loss (\Eqn{eqn:logistic}) is known to be 4-strongly proper composite with underlying link given by the logit link (\Eqn{eqn:logit}) \cite{Agarwal14}.

\textbf{Additional notation.}
To prove our regret transfer bound, we will also need some additional notation. 
In particular, 
for each $\y,\hat{\y}\in \{0,1\}^s$, we will define the vectors 
\begin{eqnarray}
\a_\y & = & 
	\left( \begin{array}{c}
		a_{\y,0} \\
		a_{\y,11} \\
		\vdots \\
		a_{\y,ss} 
	\end{array} \right)
	\in \{0,1\}^{s^2+1}
\label{eqn:a-y}
\\
\b_{\hat{\y}} & = & 
	\left( \begin{array}{c}
		b_{\hat{\y},0} \\
		b_{\hat{\y},11} \\
		\vdots \\
		b_{\hat{\y},ss} 
	\end{array} \right)
	\in \R^{s^2+1}
	\,,
\label{eqn:b-y-hat}
\end{eqnarray}
where $a_{\y,0}, a_{\y,jk}, b_{\hat{\y},0}, b_{\hat{\y},jk}$ are as defined in \Eqs{eqn:a}{eqn:b}.
Moreover, for each $x\in\X$, we will define 
\begin{eqnarray}
\q(x) 
	& \hspace{-7pt} = & \hspace{-7pt} 
	\E_{\y|x}[ \a_\y ] 
	= \hspace{-4pt} 
	\sum_{\y\in\{0,1\}^s} \hspace{-4pt} p(\y|x) \cdot \a_\y ~ \in[0,1]^{s^2+1}
	\,. ~~~~~~~~
\label{eqn:q}
\end{eqnarray}
Intuitively, the elements $q_0(x), (q_{jk}(x))_{j,k=1}^s$ of $\q(x)$ are the `class probability functions' corresponding to the $s^2+1$ binary CPE problems effectively created by the surrogate loss $\psi$ defined in \Eqn{eqn:psi}. 
The function $\f_S:\X\>\R^{s^2+1}$ learned by minimizing $\psi$ will be such that $\gamma^{-1}(\f_S(x))$ will serve as an estimate of $\q(x)$. 

\textbf{Regret transfer bound.} 
We are now ready to state and prove the following regret transfer bound for the family of surrogate losses defined in the previous section:

\begin{thm}
\label{thm:regret-bound}
Let $\phi:\{\pm1\}\times\R\>\R_+$ be a $\lambda$-strongly proper composite binary loss with underlying link function $\gamma:[0,1]\>\R$. 
Let ($\psi,\decode$) be defined as in \Eqs{eqn:psi}{eqn:decode}.
Then for all probability distributions $D$ on $\X\times\{0,1\}^s$ and all $\f:\X\>\R^{s^2+1}$, we have
\[
\regret_D^{F_\beta}[\decode \circ \f] 
	~ \leq ~
	\frac{1+\beta^2}{\beta} 
	\sqrt{\frac{2(\ln s+1)}{\lambda} \cdot \regret_D^\psi[\f]}
	\,.
\]
\end{thm}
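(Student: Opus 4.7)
The plan is to follow the standard two-stage regret-transfer pipeline: first bound the pointwise conditional $F_\beta$-regret $r(x)$ in terms of $\|\q(x) - \hat{\q}(x)\|_2$, where $\hat{\q}(x) := \gamma^{-1}(\f(x))$ denotes the plug-in estimate of $\q(x) = \E_{\y|x}[\a_\y]$; then use $\lambda$-strong properness of $\phi$ to bound $\|\q(x) - \hat{\q}(x)\|_2^2$ by the conditional $\psi$-regret at $x$; finally, integrate over $x$ via Jensen's inequality on $\sqrt{\cdot}$.

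\textbf{Step 1 (regret to inner product).} By \Prop{prop:rank-F}, $\E_{\y|x}[F_\beta(\y,\hat{\y})] = -\q(x)^\top \b_{\hat{\y}}$, so the Bayes prediction $\h^*(x)$ minimizes $\q(x)^\top \b_{\hat{\y}}$ over $\hat{\y}\in\{0,1\}^s$, while $\h(x) := \decode(\f(x))$ minimizes $\hat{\q}(x)^\top \b_{\hat{\y}}$. Adding and subtracting $\hat{\q}(x)^\top(\b_{\h^*(x)} - \b_{\h(x)})$ (which is non-positive by optimality of $\h(x)$ for $\hat{\q}(x)$) yields the pointwise inequality
\[
r(x) ~\leq~ (\q(x) - \hat{\q}(x))^\top(\b_{\h(x)} - \b_{\h^*(x)}) ~\leq~ 2 \max_{\hat{\y}\in\{0,1\}^s} \big|(\q(x) - \hat{\q}(x))^\top \b_{\hat{\y}}\big|.
\]

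\textbf{Step 2 (structure of $\b_{\hat{\y}}$ via Cauchy--Schwarz).} Fix $\hat{\y}$ and write $m = \|\hat{\y}\|_1$. If $m = 0$, then all $b_{\hat{\y},jk} = 0$ and $|b_{\hat{\y},0}| = 1$, giving $|(\q - \hat{\q})^\top \b_{\hat{\y}}| = |q_0(x) - \hat{q}_0(x)| \leq \|\q(x) - \hat{\q}(x)\|_2$. If $m \geq 1$, then $b_{\hat{\y},0} = 0$, and using $\hat{y}_j^2 = \hat{y}_j$ and Cauchy--Schwarz,
\[
\big|(\q - \hat{\q})^\top \b_{\hat{\y}}\big| ~\leq~ (1+\beta^2)\sqrt{\sum_{k=1}^s \frac{m}{(\beta^2 k + m)^2}} \cdot \|\q(x) - \hat{\q}(x)\|_2.
\]
The key calculation is that $m \mapsto m/(\beta^2 k + m)^2$ is maximized over $m > 0$ at $m = \beta^2 k$, with value $1/(4\beta^2 k)$ (equivalently, by AM--GM, $(\beta^2 k + m)^2 \geq 4\beta^2 k m$). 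Hence $\sum_{k=1}^s m/(\beta^2 k + m)^2 \leq (4\beta^2)^{-1}\sum_{k=1}^s 1/k \leq (\ln s + 1)/(4\beta^2)$ uniformly in $m$. The resulting constant $\frac{1+\beta^2}{2\beta}\sqrt{\ln s + 1}$ exceeds $1$ (since $1+\beta^2 \geq 2\beta$), so both cases are dominated by it and
\[
r(x) ~\leq~ 2 \cdot \frac{1+\beta^2}{2\beta}\sqrt{\ln s + 1} \cdot \|\q(x) - \hat{\q}(x)\|_2 ~=~ \frac{1+\beta^2}{\beta}\sqrt{\ln s + 1}\, \|\q(x) - \hat{\q}(x)\|_2.
\]

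\textbf{Step 3 (strong properness and Jensen).} By construction, $\E_{\y|x}[\psi(\y,\u)] = \sum_{i} \E_{\sigma \sim \mathrm{Bin}^{\pm 1}(q_i(x))}[\phi(\sigma, u_i)]$, so the conditional $\psi$-regret at $x$ decomposes into $s^2+1$ conditional binary-$\phi$-regrets; $\lambda$-strong properness applied to each component gives $\|\q(x) - \hat{\q}(x)\|_2^2 \leq (2/\lambda) \cdot (\text{conditional $\psi$-regret of $\f$ at $x$})$. Chaining with Step 2, taking expectation over $x$, and invoking Jensen's inequality on the concave $\sqrt{\cdot}$ produces the claimed bound.

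\textbf{Main obstacle.} The delicate ingredient is arranging the Cauchy--Schwarz split in Step 2 so that the harmonic factor $\ln s + 1$ emerges \emph{uniformly} in the sparsity $m = \|\hat{\y}\|_1$ of the prediction. The $\max_m \tfrac{m}{(\beta^2 k + m)^2} = \tfrac{1}{4\beta^2 k}$ optimization is what strips the $m$-dependence inside the sum, giving the clean $\tfrac{1+\beta^2}{\beta}\sqrt{\ln s + 1}$ constant that does not depend on the specific labeling output by $\decode$.
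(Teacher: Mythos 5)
Your proposal is correct and follows essentially the same route as the paper's proof: the same linear decomposition $r(x)=\q(x)^\top(\b_{\h(x)}-\b_{\h^*(x)})$ with the decode-optimality trick to introduce $\q(x)-\gamma^{-1}(\f(x))$, the same Cauchy--Schwarz step with the bound $\max_{\hat\y}\|\b_{\hat\y}\|_2\le\frac{1+\beta^2}{2\beta}\sqrt{\ln s+1}$ via maximizing $t/(\beta^2k+t)^2$ at $t=\beta^2k$, the same coordinatewise use of $\lambda$-strong properness, and the same final Jensen step. The only (harmless) differences are that you bound the two inner products separately rather than the norm of $\b_{\h(x)}-\b_{\hat\y}$, and you make explicit the check that the $\|\hat\y\|_1=0$ case is dominated by the stated constant, which the paper leaves implicit.
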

\begin{proof}
We have,
\begin{eqnarray}
\lefteqn{
\regret_D^{F_\beta}[\decode \circ \f] 
}
\nonumber
\\	
	& \hspace{-8pt} = &
	\hspace{-8pt} 
	\E_x\Big[ \sum_{\y} p(\y|x) \cdot \Big( \ell^{F_\beta}_{\y,\decode(\f(x))} - \min_{\hat{\y}} \ell^{F_\beta}_{\y,\hat{\y}} \Big) \Big]
\nonumber
\\	
	& \hspace{-8pt} = &
	\hspace{-8pt} 
	\E_x\Big[ \sum_{\y} p(\y|x) \cdot \Big( \a_\y^\top \b_{\decode(\f(x))} - \min_{\hat{\y}} \a_\y^\top \b_{\hat{\y}} \Big) \Big]
\nonumber
\\	
	& \hspace{-8pt} = &
	\hspace{-8pt} 
	\E_x\Big[ \q(x)^\top \b_{\decode(\f(x))} - \min_{\hat{\y}} \q(x)^\top \b_{\hat{\y}}  \Big]
\nonumber
\\	
	& \hspace{-8pt} = &
	\hspace{-8pt} 
	\E_x\Big[ \max_{\hat{\y}} \,\, \q(x)^\top \Big( \b_{\decode(\f(x))} - \b_{\hat{\y}} \Big) \Big]	
\nonumber
\\	
	& \hspace{-8pt} \leq &
	\hspace{-8pt} 
	\E_x\Big[ \max_{\hat{\y}} \,\, \Big( \q(x) - \gamma^{-1}(\f(x)) \Big) ^\top \Big( \b_{\decode(\f(x))} - \b_{\hat{\y}} \Big) \Big]
\nonumber
\\
	& &
	~~~~~~~~~~\text{(since by the definition of $\decode$,} 
\nonumber
\\
	& & 
	~~~~~~~~~~\text{$-\gamma^{-1}(\f(x))^\top \Big( \b_{\decode(\f(x))} - \b_{\hat{\y}} \Big) \geq 0 ~~\forall ~\hat{\y}$)}
\nonumber
\\
	& \hspace{-8pt} \leq &
	\hspace{-8pt} 
	\E_x\Big[ \big\| \q(x) - \gamma^{-1}(\f(x)) \big\|_2  \cdot  \max_{\hat{\y}} \, \big\| \b_{\decode(\f(x))} - \b_{\hat{\y}} \big\|_2 \Big]
\nonumber
\\
	& &
	~~~~~~~~~~\text{(by the Cauchy-Schwarz inequality)}
\nonumber
\\
	& \hspace{-8pt} \leq &
	\hspace{-8pt} 
	2 \max_{\hat{\y}} \, \big\| \b_{\hat{\y}} \big\|_2 \cdot \E_x\Big[ \big\| \q(x) - \gamma^{-1}(\f(x)) \big\|_2  \Big]
	\,.
\label{eqn:proof-1}
\end{eqnarray}
%
Now,
since $\phi$ is $\lambda$-strongly proper composite with link function $\gamma$, we have 
\begin{eqnarray}
\lefteqn{
\E_x\Big[ \big\| \q(x) - \gamma^{-1}(\f(x)) \big\|_2^2  \Big]
}
\nonumber
\\
	& \hspace{-8pt} = & 
	\hspace{-8pt} 
	\E_x\Big[ 	
		\Big( q_0(x) - \gamma^{-1}(f_0(x)) \Big)^2 + 
\nonumber
\\
	& & 
		~~~~~~~~ \sum_{j=1}^s \sum_{k=1}^s \Big( q_{jk}(x) - \gamma^{-1}(f_{jk}(x)) \Big)^2 
	\Big]
\nonumber
\\
	& \hspace{-8pt} \leq & 
	\hspace{-8pt}
	\frac{2}{\lambda} \E_x\Big[ 	
		\E_{y\sim \text{Bin}^{\pm1}(q_0(x))}\Big[ \phi(y,f_0(x)) - \phi(y,\gamma(q_0(x)) \Big] +
\nonumber
\\
	& & 
	\hspace{-8pt}
		\sum_{j=1}^s \sum_{k=1}^s \E_{y\sim \text{Bin}^{\pm1}(q_{jk}(x))}\Big[ \phi(y,f_{jk}(x)) - \phi(y,\gamma(q_{jk}(x)) \Big]
	\Big]
\nonumber
\\
	& &
	~~~~~ \text{(by $\lambda$-strong proper compositeness of $\phi$)}
\nonumber
\\
	& \hspace{-8pt} = & 
	\hspace{-8pt}
	\frac{2}{\lambda} \E_x\Big[ 	
		\E_{\y|x} \Big[ \psi(\y,\f(x)) - \inf_{\u\in\R^{s^2+1}} \psi(\y,\u) \Big] 	
	\Big]
\nonumber
\\
	& \hspace{-8pt} = & 
	\hspace{-8pt}
	\frac{2}{\lambda} \regret_D^\psi[\f] 
	\,. 
	~~~~~
\label{eqn:proof-2}
\end{eqnarray}
Moreover, we have
\[
\|\b_\0\| = 1
	\,,
\]
and for $\hat{\y}\neq \0$, we have
\vspace{-7pt}
\begin{eqnarray*}
\|\b_{\hat{\y}}\|_2^2  
	& = & 
	(1+\beta^2)^2 \sum_{k=1}^s g_k(\|\hat{\y}\|_1)
	\,,
\\[-10pt]
\end{eqnarray*}
where 
\vspace{-8pt}
\[
g_k(t) = \frac{t}{(\beta^2 k + t)^2}
	\,.
\]
It can be verified that $g_k(t)$ is maximized at $t^* = \beta^2 k$, yielding for each $\hat{\y}\neq \0$,
\vspace{-4pt}
\begin{eqnarray*}
\|\b_{\hat{\y}}\|_2^2 
	& \leq & 
	(1+\beta^2)^2 \sum_{k=1}^s g_k(\beta^2 k)
\nonumber
\\
	& = & 
	(1+\beta^2)^2 \sum_{k=1}^s \frac{1}{4\beta^2 k}
\nonumber
\\
	& \leq &
	\frac{(1+\beta^2)^2}{4\beta^2} (\ln s + 1) 
\nonumber
\\
	& &
	~~~~~~~~~ \textrm{(since $\textstyle{\sum_{k=1}^s} \frac{1}{k} \leq \ln s + 1$)}
	\,.
\nonumber
\\[-20pt]
\nonumber
\end{eqnarray*}
This gives
\vspace{-12pt}
\begin{eqnarray*}
\max_{\hat{\y}} \|\b_{\hat{\y}}\|_2 
	& \leq &
	\frac{(1+\beta^2)}{2\beta} \sqrt{\ln s + 1}
	\,.
\\[-18pt]
\label{eqn:proof-3}
\end{eqnarray*}
Combining \Eqs{eqn:proof-1}{eqn:proof-3} and applying Jensen's inequality (to the convex function $g(z)=z^2$) proves the claim.
\end{proof}

\textbf{Remark.}
We note that \Thm{thm:regret-bound} gives a self-contained proof that the surrogate-mapping pair $(\psi,\decode)$ defined in \Eqs{eqn:psi}{eqn:decode} is $\L^{F_\beta}$-calibrated, since the result implies that for any sequence of models $\f_S$ learned from training samples $S\sim D^m$ of increasing size $m$, 
\vspace{-4pt}
\[
\regret_D^\psi[\f_S] \ip 0
	\implies 
	\regret_D^{F_\beta}[\decode \circ \f_S] \ip 0
	\,.
\vspace{-4pt}
\]
Nevertheless, since the design of our surrogate-mapping pair $(\psi,\decode)$ was based on the work of \citet{Ramaswamy+14}, we chose to present their calibration result (\Thm{thm:Ramaswamy+14}) first. We also note that, while we have stated the above regret transfer bound for the $F_\beta$-measure, a similar bound also applies more generally to all multiclass problems with low-rank matrices as considered in \Thm{thm:Ramaswamy+14}, thus yielding a stronger (quantitative) result than \Thm{thm:Ramaswamy+14} \cite{Ramaswamy15-thesis}.

\vspace{-4pt}
\section{Relationship with Plug-in Algorithm of \citet{Dembczynski2013}}
\label{sec:relationship}
\vspace{-2pt}

The plug-in algorithm of \citet{Dembczynski2013}, termed \emph{exact $F$-measure plug-in} (EFP), estimates the following statistics of the conditional label distribution $p(\y|x)$:
\vspace{-4pt}
\begin{eqnarray*}
\lefteqn{\P(\|\y\|_1=0 \,|\, x)} 
~~~~~~~~~~~~~~~~~~~~~~~~~~~~~~~~~~~~~~~~~~~~~~~~~~~~~~~~~~~~~~~~~~~~~~~~~~~ & &
\\
\lefteqn{\P(\|\y\|_1=k, y_j=1 \,|\, x) \,, ~ \P(y_j=0 \,|\, x) \,, ~~ j,k\in[s] \,.} 
~~~~~~~~~~~~~~~~~~~~~~~~~~~~~~~~~~~~~~~~~~~~~~~~~~~~~~~~~~~~~~~~~~~~~~~~~~~ & & 
\\[-18pt]
\end{eqnarray*}
It formulates estimation of the first statistic above as a binary CPE problem (solved via binary logistic regression), and estimation of the remaining statistics as $s$ multiclass CPE problems (one for each $j\in[s]$), each with $s+1$ classes (solved via multiclass logistic regression).
In practice, since the label vectors $\y$ are typically sparse (only a small subset of the $s$ labels are active in any instance), the effective number of classes for each of the $s$ problems is much smaller than $s+1$, and \citet{Dembczynski2013} exploit this fact by considering the statistics 
$\P(\|\y\|_1=k, y_j=1 \,|\, x)$ only for small $k$ (based on the maximum number of active labels in the training instances).

As the proof of \Thm{thm:regret-bound} makes clear, our algorithm can be viewed as estimating the vector $\q(x)\in[0,1]^{s^2+1}$, with estimation of each component formulated as a binary CPE problem; in particular, having learned a score vector $\f_S:\X\>\R^{s^2+1}$, our algorithm yields $\gamma^{-1}(\f_S(x)) \in[0,1]^{s^2+1}$ as an estimate for $\q(x)$.
A closer look reveals that $\q(x)$ captures essentially the same $s^2+1$ statistics as above:\footnote{Note that for each $j\in[s]$, the $s+1$ probabilities $\P(\|\y\|_1=k, y_j=1 \,|\, x)$ ($k\in [s]$) and $\P(y_j=0 \,|\, x)$ estimated by the $j$-th multiclass problem in EFP add up to 1, so the EFP algorithm effectively estimates a total of $s^2+1$ statistics.}
\vspace{-2pt}
\begin{eqnarray*}
q_{0}(x) 
	& \hspace{-8pt} = & \hspace{-8pt} 
	\E_{\y|x}[a_{\y,0}] 
	~\, =  
	\P(\|\y\|_1 = 0 \,|\, x) 
\\
q_{jk}(x) 
	& \hspace{-8pt} = & \hspace{-8pt}
	\E_{\y|x}[a_{\y,jk}] 
	= 
	\P(\|\y\|_1 = k, y_j = 1 \,|\, x) 
	\,,
	~~ j,k\in[s]
	\,.
\\[-16pt]	
\end{eqnarray*}
Thus, both algorithms effectively estimate the same statistics of the conditional label distribution $p(\y|x)$; indeed, these are precisely the statistics needed to compute a Bayes optimal multi-label classifier for the $F_\beta$-measure \cite{Dembczynski2011}. 
In practice, as with the EFP algorithm, our algorithm can also be implemented to estimate $q_{jk}(x)$ only for small values of $k$ (i.e.\ values of $k$ for which labelings $\y$ with $\|\y\|_1=k$ are actually seen in the training data).

\vspace{-4pt}
\section{Experiments}
\label{sec:expts}
\vspace{-2pt}

We conducted two sets of experiments to evaluate our algorithm. 
In the first experiment, we generated synthetic data from a known distribution for which 
the Bayes optimal $F_1$-accuracy could be estimated, 
and tested the convergence of our algorithm to this optimal $F_1$ performance.
In the second set of experiments, we compared the performance of our algorithm to that of other algorithms on various benchmark data sets.
We summarize both sets of experiments below.

\vspace{-4pt}
\subsection{Synthetic Data: Convergence to Bayes Optimal $F_1$}
\label{subsec:expts-synthetic}
\vspace{-2pt}

In the first experiment, we tested the consistency behavior of our algorithm on a synthetic data set from a known distribution for which the Bayes optimal $F_1$ performance could be estimated.
Specifically, we generated a multi-label data set with instances $\x$ in $\X=\R^{100}$ and $s=6$ labels/tags (i.e., labelings $\y$ in $\{0,1\}^6$), such that the vector $\q(\x)\in[0,1]^{37}$ containing the $s^2+1=37$ statistics of 
the conditional label distribution $p(\y|x)$
needed to compute a Bayes optimal multi-label classifier for $F_1$ (see \Eqn{eqn:q}) could be obtained from a linear function of $\x$. 
More precisely, we fixed a matrix $\W\in[0,1]^{37\times 100}$ with entries drawn uniformly at random from $[0,1]$; we checked that $\W$ has full row rank. We also fixed a vector $\balpha\in[0.1,1]^{64}$ with entries drawn uniformly from $[0.1,1]$. To generate a data point $(\x,\y)$, we then did the following: we first sampled $\p\in\Delta_{64}\equiv\Delta_{\{0,1\}^6}$ from $\textrm{Dirichlet}(\balpha)$. We set $\q=\E_{\y\sim\p}[\a_\y] \in [0,1]^{37}$, where $\a_\y\in\{0,1\}^{37}$ is as defined in \Eqn{eqn:a-y}.
We then took $\x = \W^\dagger \gamma_{\log}(\q)$, and drew $\y\sim\p$ (here $\W^\dagger$ denotes the pseudo-inverse of $\W$).
It can be verified that this gives $\q(\x) =\q = \gamma_{\log}^{-1}(\W\x)$, and therefore, taking the function class $\F$ in our algorithm to be the class of linear functions (i.e., functions of the form $\x\mapsto\V\x$ for $\V\in\R^{37\times 100}$) suffices to learn a Bayes optimal multi-label classifier.

With the above settings, we used our algorithm (with logistic binary loss $\phi_{\log}$ and linear function class) to learn a multi-label classifier from increasingly large training samples drawn according to the above distribution, and measured the $F_1$ performance on a large test set of $15,000$ data points drawn from the same distribution. The results are shown in \Fig{fig:consistency}. As can be seen, our algorithm indeed converges to a Bayes optimal classifier for $F_1$.

\begin{figure}[t]
\begin{center}
\scalebox{0.25}{\includegraphics{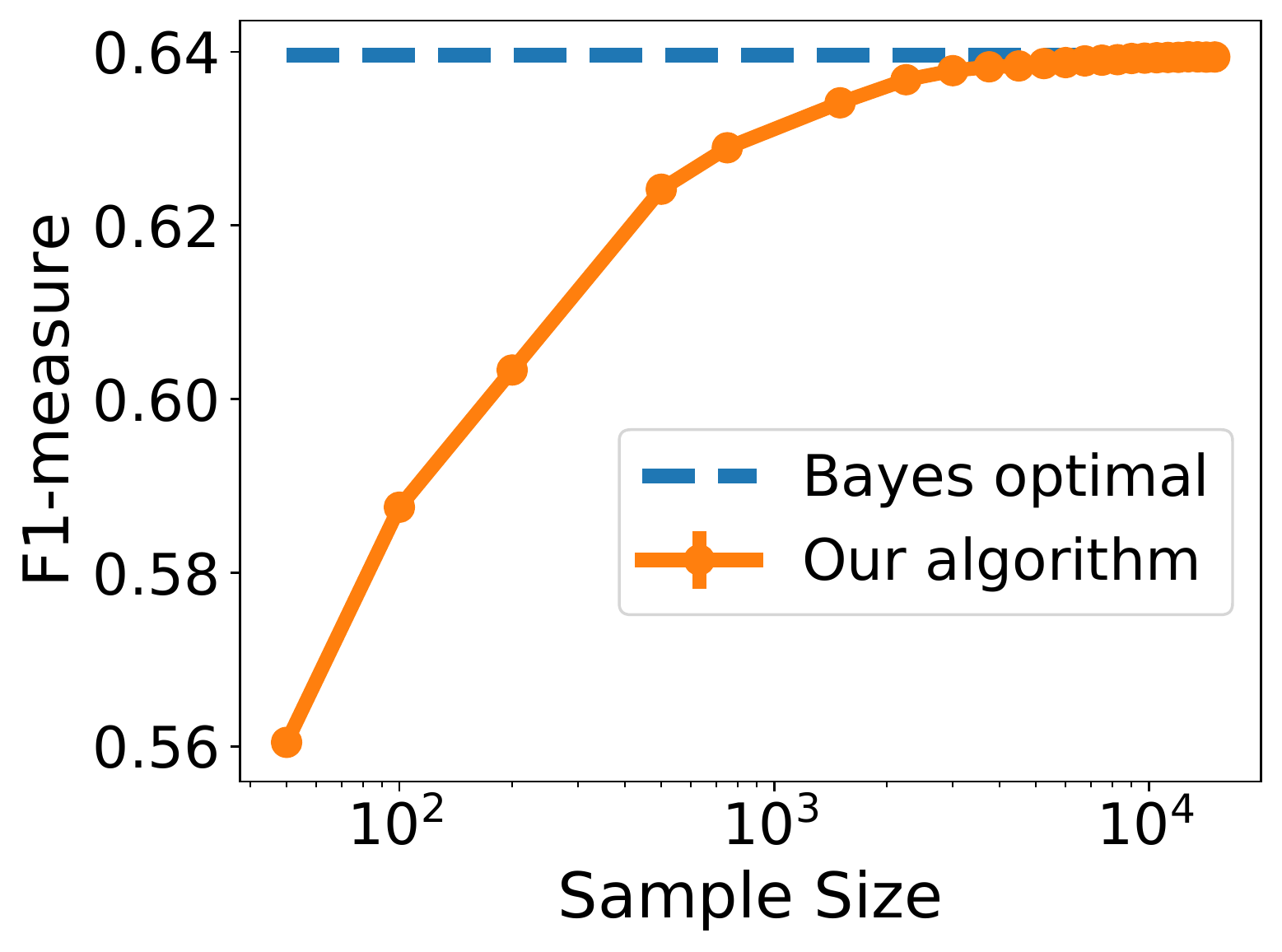}}
\vspace{-12pt}
\caption{Convergence of our algorithm to Bayes optimal $F_1$ performance on synthetic multi-label data (see \Sec{subsec:expts-synthetic}).}
\label{fig:consistency}
\vspace{-8pt}
\end{center}
\end{figure}

\vspace{-4pt}
\subsection{Real Data: Comparison with Other Algorithms}
\label{subsec:expts-real}
\vspace{-2pt}

In the second set of experiments, we evaluated the performance of our algorithm on various benchmark multi-label data sets drawn from the Mulan repository\footnote{http://mulan.sourceforge.net/datasets-mlc.html}. Details of the data sets are provided in \Tab{tab:data-sets}. All the data sets come with prescribed train/test splits. After training our models on the training set, we measure the \emph{instance-averaged} $F_1$ performance on the test set (i.e., we compute the multi-label $F_1$-measure on each test example and take the average).

We compared with the following algorithms: EFP \cite{Dembczynski2013}, LIMO (label-wise version recommended for instance-averaged $F_1$) \cite{Wu2017}, and BR (which treats the $s$ labels as conditionally independent and trains $s$ binary logistic regression classifiers, one for each label).
All algorithms were trained to learn linear models. Regularization parameters (for regularized logistic regression in our algorithm, EFP, and BR; and for the margin-based objective in LIMO) were chosen by 5-fold cross-validation on the training set from $\{10^{-4},\ldots,10^{3}\}$ (for all algorithms, the parameter value maximizing average $F_1$-measure across the 5 folds was selected).
For our algorithm and EFP, as discussed in \Sec{sec:relationship}, we generally implemented the algorithms to estimate only a small subset of the $s^2+1$ statistics in $\q(x)$ (only those corresponding to numbers of active labels seen in the training data); for the Birds data set, this resulted in poor performance for both algorithms, and so for this data set we trained both algorithms to perform a full estimation of all $s^2+1$ statistics.

The results are shown in \Tab{tab:results} (the asterisks in the results for the Birds data set denote the full estimation of $s^2+1$ statistics for this data set, as discussed above). 
As expected, the performance of our algorithm is similar to that of EFP. BR, as expected, is generally a relatively weak baseline. 
LIMO is sometimes competitive, but since it aims to simultaneously optimize several multi-label performance measures, we do not expect it to outperform algorithms designed for a specific performance measure, and indeed this is borne out in our experiments.

\begin{table}[t]
\vspace{-6pt}
\small
\begin{center}
\caption{Multi-label data sets used in experiments in \Sec{subsec:expts-real}.}
\label{tab:data-sets}
\vspace{1pt}
\begin{tabular}{@{~~}lrrrr@{~~}}
\hline
\textbf{{Data set}}		& \textbf{{\# train}}	& \textbf{{\# test}} 	& \textbf{{\# labels}} & \textbf{{\# features}} \\
\hline
Scene \rule{0pt}{9pt}	 	& {1211} 				& {1196}		& {6} & 294 \\
Yeast				& {1500} 				& {917}		& {14} & 103 \\
Birds					& 322 & 323 & 19 & 260 \\
Medical 	 			& {333} 				& {645}			& {45} & 1449 \\
Enron		 		& {1123} 				& {579}		& {53} & 1001 \\
Mediamill 		 		& {30993} 			& {12914}		& {101} & 120 \\
\hline
\end{tabular}
\end{center}
\vspace{-16pt}
\end{table}

\begin{table}[t]
\vspace{-6pt}
\small
\begin{center}
\caption{Comparison of $F_1$ performance of our algorithm with other MLC algorithms on various Mulan multi-label data sets. Higher values are better. See \Sec{subsec:expts-real} for details and for an explanation of the asterisks for the Birds data set.}
\label{tab:results}
\vspace{1pt}
\begin{tabular}{@{~~}lrrrr@{~~}}
\hline
\textbf{{Data set}}		& \textbf{{Our algorithm}}	& \textbf{{EFP}} 	& \textbf{{LIMO}} & \textbf{BR} \\
\hline
Scene \rule{0pt}{9pt}	 	& \textbf{0.7445} & {0.7426} & {0.6325} & {0.6009} \\
Yeast				& \textbf{0.6571} & {0.6558} & {0.4914} & {0.6065} \\
Birds					& *\textbf{0.5836} & *{0.5293} & 0.5463 & 0.5510 \\
Medical 	 			& {0.7557} & \textbf{0.7685} & 0.7237 & 0.6507 \\
Enron		 		& 0.5868 & \textbf{0.6204} & 0.5764 & 0.5455 \\
Mediamill 		 		& \textbf{0.5642} & 0.5600 & 0.5135 & 0.5229 \\
\hline
\end{tabular}
\end{center}
\vspace{-12pt}
\end{table}

\vspace{-6pt}
\section{Conclusion}
\label{sec:concl}
\vspace{-2pt}

We have provided a family of convex calibrated surrogate losses for the multi-label $F_\beta$-measure, together with a quantitative regret transfer bound. Our surrogates effectively decompose the $F_\beta$ learning problem over $s$ labels into (at most) $s^2+1$ binary class probability estimation (CPE) problems. 
The regret transfer bound allows us to transfer any regret guarantees on the binary CPE learners to regret guarantees on the overall $F_\beta$ learner.
Although motivated from a different viewpoint, like the EFP algorithm of \citet{Dembczynski2013}, our algorithm can also be viewed as a type of `plug-in' algorithm for the $F_\beta$-measure. 
While we have described the algorithm in the context of multi-label classification, the algorithm can also be used for binary sequence labeling tasks where the $F_\beta$-measure is useful.

\clearpage
\section*{Acknowledgments}

This material is based upon work supported in part by the US National Science Foundation (NSF) under Grant Nos.\ 1934876 and 1717290 (awarded to SA). SA is also supported in part by the US National Institutes of Health (NIH) under Grant No.\ U01CA214411. HGR thanks the Robert Bosch Center for Data Science and Artificial Intelligence at IITM for its support. Any opinions, findings, and conclusions or recommendations expressed in this material are those of the authors and do not necessarily reflect the views of the National Science Foundation, the National Institutes of Health, or the Robert Bosch Center.


\bibliography{refs-multilabel,refs-surrogates}
\bibliographystyle{icml2020}

\clearpage
\onecolumn
\appendix

\begin{center}
\textbf{\Large Convex Calibrated Surrogates for the Multi-Label F-Measure}
\\[8pt]
\textbf{\Large Supplementary Material} 
\\[8pt]
\end{center}

\section*{Implementation of `decode'}

In order to 
solve the combinatorial optimization problem involved in the mapping $\decode:\R^{s^2+1}\>\{0,1\}^s$ as defined in \Eqn{eqn:decode} 
efficiently, we make use of an $O(s^3)$-time procedure due to \citet{Dembczynski2011}. Specifically, \citet{Dembczynski2011} gave a procedure that, given a certain set of $s^2+1$ statistics of the true conditional distribution $p(\y|x)$ at a point $x\in\X$, computes in $O(s^3)$ time a Bayes optimal multi-label prediction $h^*(x)\in\{0,1\}^s$ at that point with respect to the $F_1$-measure by solving a similar combinatorial optimization problem (the approach generalizes easily to the $F_\beta$-measure for general $\beta$).
Our algorithm (\Algo{algo}) can be viewed as effectively estimating the same $s^2+1$ statistics from the training sample $S$; in particular, once a scoring function $\f_S:\X\>\R^{s^2+1}$ is learned by minimizing our surrogate loss $\psi$, the estimated statistics at a point $x\in\X$ are given by $\gamma^{-1}(\f_S(x))$ (where $\gamma^{-1}$ is the inverse of the link function $\gamma:[0,1]\>\R$ associated with the strictly proper composite binary loss $\phi$ used in our surrogate, and is applied element-wise to $\f_S(x)$).
Our `decode' mapping effectively corresponds to estimating a Bayes optimal prediction at $x$ using these estimated statistics;
we can therefore apply the procedure of \citet{Dembczynski2011} to these estimated statistics.

The implementation below is described for a general input vector $\u\in\R^{s^2+1}$ (see \Eqn{eqn:decode}); in our $F_\beta$ learning algorithm, to make a prediction at $x\in\X$, it would be applied to $\u=\f_S(x)$.
The overall idea is that the combinatorial search over $\hat{\y}\in \{0,1\}^s$ is stratified over the $s+1$ sets $\hat{\Y}_l = \{\hat{\y} \in\{0,1\}^s: \|\hat{\y}\|_1 = l\}$, $l\in\{0,1,\ldots,s\}$; 
to find an optimal element $\hat{\y}^{l,*}$ within each set $\hat{\Y}_l$, one need only solve a problem of the form $\hat{\y}^{l,*} \in \argmin_{\hat{\y}\in\hat{\Y}_l} \sum_{j=1}^s \hat{y}_j T_{jl}$ for certain numbers $T_{jl}$, which can be done simply by finding the smallest $l$ numbers among $\{T_{jl}:j\in[s]\}$ and setting the corresponding $l$ entries of $\hat{\y}^{l,*}$ to 1 (and remaining entries to 0). Solving these $s+1$ subproblems and picking the best solution among them takes a total of $O(s^2\ln(s))$ time; computing the $s^2$ numbers $T_{jl}$ involves a matrix multiplication that takes a total of $O(s^3)$ time.\footnote{One could in principle use faster matrix multiplication methods that take $o(s^3)$ time, but in practice, this would be helpful for only extremely large values of $s$.}

\begin{algorithm}[h]
\caption{Decode}
\begin{algorithmic}[1]
\STATE \textbf{Input:} 
	Vector $\u=(u_0,(u_{jk})_{j,k=1}^s))^\top\in\R^{s^2+1}$ 
\STATE \textbf{Parameters:} 
	Link function $\gamma:[0,1]\>\R$ 
\STATE 
Define matrices $\Q\in[0,1]^{s\times s}$ and $\V\in\R^{s\times s}$ as follows:
\begin{eqnarray*}
Q_{jk} & = & \gamma^{-1}(u_{jk}) \\
V_{kl} & = & \frac{-(1+\beta)^2}{\beta^2 k + l} 
\end{eqnarray*}
\STATE 
Compute $\T=\Q\V$   ~~~\red{// matrix multiplication, $O(s^3)$ time}
\STATE
\textbf{For} $l=1\ldots s$:  ~~~~~~~~~~~~~~~~~~~~~~~~~~~~~~~~\red{// for loop takes total $O(s^2\ln(s))$ time}
\STATE
\quad\quad Find the $l$ smallest numbers among $\{T_{jl}:j\in[s]\}$; call the corresponding indices $j_1^l,\ldots,j_l^l$ 
\STATE
\quad\quad Define $\hat{\y}^{l,*}\in\{0,1\}^s$ as follows: 
\begin{eqnarray*}
\hat{y}^{l,*}_j & = & 
	\left\{ 
		\begin{array}{ll}
			1 & \text{if $j \in \{j_1^l,\ldots,j_l^l\}$} \\
			0 & \text{otherwise.} 
		\end{array}
	\right.
	~~~\red{\text{// this solves $\hat{\y}^{l,*} \in \textstyle{\argmin_{\hat{\y}\in\hat{\Y}_l} \sum_{j=1}^s \hat{y}_j T_{jl}}$}}
\end{eqnarray*}
\STATE
\quad\quad Set $z^*_l = \sum_{j=1}^s \hat{y}^{l,*}_j T_{jl}$
\STATE
\textbf{End for}
\STATE
Pick $\hat{\y}^* \in \{0,1\}^s$ as follows:
\[
\hat{\y}^* \in \argmin_{\hat{\y}\in\{\0,\,\hat{\y}^{1,*},\ldots,\,\hat{\y}^{s,*}\}} ~ 
	-\1(\hat{\y}=\0) \cdot \gamma^{-1}(u_0) + \1(\hat{\y}\neq \0) \cdot z^*_{\|\hat{\y}\|_1}
\]
\STATE \textbf{Output:} $\hat{\y}^* \in \{0,1\}^s$
\end{algorithmic}
\label{algo}
\end{algorithm}


\end{document}